\PassOptionsToPackage{breaklinks,hidelinks}{hyperref}
\documentclass[11pt]{article}
\usepackage[preprint]{acl}
\usepackage{times}
\usepackage{latexsym}
\usepackage[T1]{fontenc}
\usepackage[utf8]{inputenc}
\usepackage{microtype}
\usepackage{inconsolata}
\usepackage{amsmath,amssymb,amsthm}
\usepackage{booktabs}
\usepackage{graphicx}
\usepackage{algorithm}
\usepackage{algorithmic}
\usepackage{xcolor}
\usepackage{subcaption}
\usepackage{tikz}
\usetikzlibrary{arrows.meta,positioning,fit}

\usepackage{pgfplots}
\pgfplotsset{compat=1.17}
\definecolor{cblue}{RGB}{224,236,248}
\definecolor{cgreen}{RGB}{226,240,217}
\definecolor{corange}{RGB}{252,229,205}
\definecolor{cpurple}{RGB}{232,222,248}
\definecolor{cgray}{RGB}{242,242,242}
\definecolor{cline}{RGB}{90,90,90}

\makeatletter
\@ifpackageloaded{lineno}{%
  \setlength\linenumbersep{12pt}%
}{}
\makeatother
\newtheorem{theorem}{Theorem}

\newtheorem{proposition}[theorem]{Proposition}

\newtheorem{definition}[theorem]{Definition}

\newcommand{\eps}{\varepsilon}
\newcommand{\cL}{\mathcal{L}}
\newcommand{\cQ}{\mathcal{Q}}
\newcommand{\cC}{\mathcal{C}}

\newcommand{\R}{\mathbb{R}}
\newcommand{\E}{\mathbb{E}}

\title{Canonicalized Stable-List Replay for Private Federated Continual Learning over Language-Model Embeddings}

\author{
\textbf{Ibne Farabi Shihab}\textsuperscript{1}%
\thanks{Corresponding author: \texttt{ishihab@iastate.edu}.}
\and
\textbf{Abu Sa-Adat Mohamed Moon-Im Al Ahsan}\textsuperscript{2}
\and
\textbf{Anuj Sharma}\textsuperscript{3}
\\[2pt]
\textsuperscript{1}Department of Computer Science, Iowa State University \\
\textsuperscript{2}Department of Computer Science \& Engineering, BRAC University \\
\textsuperscript{3}Department of Civil, Construction \& Environmental Engineering, Iowa State University \\
\texttt{ishihab@iastate.edu}, \texttt{abu.sa.adat.mohamed.moon.im.al.ahshan@g.bracu.ac.bd}
}

\begin{document}
\maketitle

\begin{abstract}
Federated continual learning (FCL) lets distributed clients adapt language-model heads to evolving NLP tasks without sharing raw text. Under user-level differential privacy (DP), replay-based continual learning faces a structural obstacle: 
clients can release only small noisy lists of candidate replay summaries, and those lists are unordered across clients. We introduce Canonicalized Stable-List Replay (CSLR), where clients privately produce candidate replay distributions over a shared sentence-embedding space and the server aligns them using signatures induced by public anchor sentences. The anchors provide identifiability for aggregation rather than additional replay data. We prove that, under an observable anchor-signature margin, $O(\log(N/\eta)/p)$ anchors distinguish $N$ candidate list elements with probability at least $1-\eta$, and we give a scoped anchorless non-identifiability result for unordered-label oracle models. Across five seeds on continual classification, NER, and dialogue benchmarks, CSLR improves the final average task metric by 3.9--5.6 points over the strongest non-CSLR DP baseline at $\eps=4$ under the reported replay-release budget, while also outperforming Hungarian and optimal-transport matchers. The formal privacy guarantee covers replay release; end-to-end private training additionally requires composition with a private optimizer for task-head updates.
\end{abstract}

%---------------------------------------------------------------
\section{Introduction}
\label{sec:intro}
%---------------------------------------------------------------

Language models deployed across distributed clients---such as hospitals, mobile devices, and organizations handling sensitive documents---must continually adapt to new tasks and domains. Federated learning (FL) enables collaborative training without centralizing text, while continual learning (CL) addresses evolving task streams; their intersection, \emph{federated continual learning} (FCL), has therefore attracted growing attention \citep{yoon2021federated,qi2023better,babakniya2023dont}.

The central difficulty in FCL is catastrophic forgetting: as clients learn new tasks, performance on earlier tasks degrades. Replay methods mitigate this by rehearsing past data \citep{rolnick2019experience}, but raw-text replay conflicts with FL privacy goals, and even model-derived statistics can leak information. Existing private approaches use differentially private data sharing \citep{yoon2021federated}, generative replay \citep{qi2023better}, or diffusion-based synthesis \citep{liang2024diffusion}; however, they may weaken privacy guarantees, degrade under DP noise, or struggle with heterogeneous client distributions.

We pursue a different replay interface. Instead of requiring each client to release one replay distribution---a brittle target under user-level DP noise---we allow a small \emph{list} of candidate distributions in sentence-embedding space. This follows stable list decoding in DP density estimation \citep{afzali2024agnostic,ghazi2021power}, where producing a list can reduce sample complexity. In FCL, this lets clients privately report useful replay candidates from limited local data, while avoiding the stronger noise burden of a single estimate.

Lists, however, introduce a symmetry problem. If Client A outputs $\{q_1^A,q_2^A,q_3^A\}$ and Client B outputs $\{q_1^B,q_2^B,q_3^B\}$, their local indices need not correspond: mixture components are permutation-invariant, so naive aggregation is undefined. The server must first determine which candidate distributions represent the same replay mode.

We resolve this with anchor canonicalization. Public anchor sentences, drawn for example from Wikipedia, induce signatures on list elements and thereby break permutation symmetry. Unlike FedTA \citep{yang2024fedta}, where anchors improve representation quality, our anchors serve \emph{identifiability}: they make aggregation well-defined before replay quality can even be evaluated. We introduce CSLR, a list-based replay interface for private FCL over frozen language-model embeddings, together with an anchor-based procedure for aligning unordered client lists. We prove a canonicalization guarantee under an explicit anchor-visible margin assumption (Theorem~\ref{thm:canon}), give an anchorless non-identifiability result in an unordered-label oracle model (Proposition~\ref{prop:barrier}), and provide replay-release privacy accounting in which local list release and noisy aggregation consume privacy budget, while anchor matching, PSD projection, and replay sampling are post-processing. End-to-end user-level DP additionally requires composing this replay-release budget with a DP optimizer for task-head updates.

%----------------------------------------------------------------------
\section{Related Work}
\label{sec:related}
%----------------------------------------------------------------------

Federated continual learning combines FL's privacy constraints with CL's temporal dynamics. FedWeIT \citep{yoon2021federated} decomposes weights into global and task-specific parameters; FedCIL \citep{qi2023better} uses generative replay with model consolidation; and diffusion-based replay has also been studied for continual federated learning \citep{mei2024diffusion}. AF-FCL \citep{afcl2025} introduces selective replay, and \citet{fcl_survey2025} provide a recent survey. These works mitigate forgetting, but they do not address the identifiability problem created when clients release unordered private lists of replay distributions.

Several related systems transfer compact representations rather than raw examples. Powder \citep{piao2024powder} studies prompt-based dual knowledge transfer in FCL, while MOON \citep{moon2021} uses contrastive representation alignment to reduce client drift in non-IID FL. CSLR is complementary: it also uses alignment, but aligns replay \emph{distributions} rather than model representations. Similarly, data-diet methods \citep{data_diet2021} show that selecting informative examples can improve training efficiency; CSLR instead releases a list of high-utility private replay candidates and then canonicalizes them across clients.

On privacy, DP-SGD \citep{abadi2016deep} and federated DP training \citep{mcmahan2018learning} are standard, with NLP applications in classification \citep{li2022large}, NER \citep{lyu2020differentially}, and generation \citep{yue2023synthetic}. Parameter-efficient private tuning, including private prompt-tuning analyses such as Flocks of Stochastic Parrots \citep{duan2023flocks}, is also relevant because CSLR adapts lightweight heads over frozen representations rather than full encoders. DP-FCL evaluations \citep{ouyang2023dpfcl} further show that privacy mechanisms can interact strongly with forgetting and performance. Our replay-release interface builds on list global stability \citep{ghazi2021power} and agnostic private density estimation for Gaussian mixtures \citep{afzali2024agnostic}, but adds the federated canonicalization layer needed for cross-client replay aggregation. Anchor-based methods also appear in FL, but with different goals. FedTA \citep{yang2024fedta} uses tail anchors for representation quality, and FedProto \citep{tan2022fedproto} aggregates class prototypes. Our anchors instead provide identifiability: they break permutation symmetry so list aggregation is well-defined before replay quality is optimized.

%----------------------------------------------------------------------
\section{Problem Setting}
\label{sec:setting}
%----------------------------------------------------------------------

We consider a federated continual learning model in which time proceeds in rounds $t = 1, \ldots, T$. At round $t$, a subset $\cC_t$ of clients participates, and each client $i \in \cC_t$ holds current text data $D_{i,t}$ from an NLP task that may drift over time---for instance, a new classification domain, a new NER schema, or a new dialogue intent. The objective is to minimize average risk across all NLP tasks seen so far, $\frac{1}{T} \sum_{t=1}^T R_t(\theta)$, where $R_t$ is the risk on task $t$ and $\theta$ denotes the model parameters. Standard FCL metrics include average accuracy and backward transfer.

We target user-level $(\eps, \delta)$-DP, in which neighboring datasets differ in the entire contribution of one client across the training horizon. This is stronger than record-level DP and is appropriate when each client is a user whose full participation must be protected. We assume access to a pretrained sentence encoder $\phi : \text{text} \to \R^d$ (for example, a frozen Sentence-BERT \citep{reimers2019sentence} or any publicly pretrained encoder). Operating in embedding space makes density modeling tractable while letting the task head be trained privately on top of representations that are shared but unchanged. Throughout, ``replay'' refers to rehearsing embeddings and their associated targets, not raw text.

We distinguish two privacy scopes. The first is the \emph{replay-release} scope: clients release private list candidates and noisy replay summaries from which the server constructs $Q_t$. This is the mechanism analyzed in Theorem~\ref{thm:privacy} and used for the reported privacy--utility experiments. The second is \emph{end-to-end federated training}: if task-head weights, gradients, or model updates are transmitted during training, those updates must also be privatized, for example with DP-FedAvg or DP-SGD. The end-to-end budget is therefore the sequential composition of replay release and private model-update training. Unless explicitly stated otherwise, $\eps$ in our experimental tables denotes the replay-release budget.

%----------------------------------------------------------------------
\section{Canonicalized Stable-List Replay}
\label{sec:method}
%----------------------------------------------------------------------

CSLR is organized as five steps that map cleanly onto a single communication round (Figure~\ref{fig:pipeline}, Algorithm~\ref{alg:cslr}): clients embed their data and run a DP list density learner; they compute signatures of each candidate against a shared anchor set; the server clusters signatures into canonical groups; per-group sufficient statistics and replay-target summaries are aggregated under DP; and clients train on a mixture of current data and replay drawn from the global generator $Q_t$. We describe each step in turn, emphasizing the design choices that make the pipeline well defined under permutation symmetry and user-level DP.

\begin{figure*}[t]
 \centering
 \begin{tikzpicture}[
   node distance=1.35cm,
   block/.style={
     draw=cline,
     rounded corners,
     very thick,
     align=center,
     minimum height=1.0cm,
     minimum width=2.65cm,
     font=\small
   },
   smallblock/.style={
     draw=cline,
     rounded corners,
     very thick,
     align=center,
     minimum height=0.85cm,
     minimum width=2.25cm,
     font=\scriptsize
   },
   arrow/.style={-Latex, thick, draw=cline}
 ]
   \node[block, fill=cgray]   (data)   {Client text\\$D_{i,t}$};
   \node[block, fill=cblue, right=of data]   (embed)  {Frozen encoder\\$z=\phi(x)\in\mathbb R^d$};
   \node[block, fill=cgreen, right=of embed] (list)   {DP list learner\\$\mathcal L_{i,t}=\{q_i^{(\ell)}\}_{\ell=1}^L$};
   \node[block, fill=corange, right=of list] (sig)    {Anchor signatures\\$s(q)=(\log q(\phi(a_j)))_{j=1}^m$};
   \node[block, fill=cpurple, below=1.15cm of sig] (match) {Canonical matching\\assignment};
   \node[block, fill=corange, left=of match]  (agg)    {Noisy aggregation\\$\bar\mu_k,\bar\Sigma_k,\bar\pi_k$};
   \node[block, fill=cgreen, left=of agg]     (replay) {Replay generator\\$Q_t$};
   \node[block, fill=cblue, left=of replay]   (train)  {Rehearsal training\\$\mathcal L_{\text{task}}+\lambda\mathcal L_{\text{replay}}$};
   \node[smallblock, fill=cpurple, above=0.75cm of sig] (anchors) {Public anchors\\$A=\{a_j\}_{j=1}^m$};

   \draw[arrow] (data) -- (embed);
   \draw[arrow] (embed) -- (list);
   \draw[arrow] (list) -- (sig);
   \draw[arrow] (anchors) -- (sig);
   \draw[arrow] (sig) -- (match);
   \draw[arrow] (match) -- (agg);
   \draw[arrow] (agg) -- (replay);
   \draw[arrow] (replay) -- (train);
 \end{tikzpicture}
 \caption{CSLR pipeline. Clients release private list candidates in a shared embedding space. Public anchors convert unordered candidates into signatures, enabling canonical matching before noisy aggregation into a replay generator.}
 \label{fig:pipeline}
\end{figure*}

\begin{algorithm}[t]
\caption{CSLR: one communication round}
\label{alg:cslr}
\small
\begin{algorithmic}[1]
\REQUIRE participating clients $\cC_t$; public anchors $A=\{a_j\}_{j=1}^m$; frozen encoder $\phi$; list size $L$; canonical modes $K$; clipping bounds $B_\mu,B_M,B_\pi,B_y$; Gaussian noise scales $\sigma_\mu,\sigma_M,\sigma_\pi,\sigma_y$.
\ENSURE private replay generator $Q_t$.
\FOR{each client $i\in\cC_t$}
  \STATE Embed local examples $z=\phi(x)$ and fit a clipped DP list learner to obtain $\cL_{i,t}=\{q_{i,t}^{(\ell)}\}_{\ell=1}^L$.
  \STATE Attach the corresponding conditional replay targets (\S\ref{subsec:replay_targets}).
  \STATE Compute signatures $s(q_{i,t}^{(\ell)})=(\log q_{i,t}^{(\ell)}(\phi(a_j)))_{j=1}^m$ for all $\ell$.
\ENDFOR
\STATE Cluster signatures into canonical groups $1,\ldots,K$ or solve a minimum-cost matching to running prototypes.
\FOR{each canonical group $k$}
  \STATE Clip candidate means, second moments, mixture weights, and replay-target summaries; aggregate them through secure aggregation and add Gaussian noise.
  \STATE Project the noisy covariance estimate onto the positive semidefinite cone with eigenvalue floor $\lambda_{\min}$.
\ENDFOR
\STATE Project noisy mixture weights $(\bar\pi_1,\ldots,\bar\pi_K)$ onto the probability simplex $\Delta_K$, and denote the result by $(\hat\pi_1,\ldots,\hat\pi_K)$.
\STATE Return the mixture $Q_t=\sum_{k=1}^K \hat\pi_k\,\mathcal N(\bar\mu_k,\bar\Sigma_k)$ with associated replay targets.
\end{algorithmic}
\end{algorithm}

\subsection{Local DP List of Replay Distributions}

Each participating client embeds its current text data $\{x_j\}_{j=1}^{n_i}$ into $\{\phi(x_j)\}$ and runs a DP list-decodable density estimator to produce a small list of candidate replay distributions $\cL_{i,t} = \{q_{i,t}^{(1)}, \ldots, q_{i,t}^{(L)}\}$, where each $q_{i,t}^{(\ell)}$ is a Gaussian (or Gaussian mixture component) in $\R^d$. Following \citet{afzali2024agnostic}, the list output satisfies a stability property: with high probability, at least one $q^{(\ell)}$ is close to the client's true embedding distribution. The privacy--utility benefit of listing arises because, with a single output, DP noise must mask the worst-case sensitivity, whereas with a list the noise can be distributed across candidates.

\subsection{Replay Target Specifications across NLP Tasks}
\label{subsec:replay_targets}

For the task head to be trained continuously, each component $q^{(\ell)}$ must carry a corresponding replay target, and the target itself must satisfy a bounded sensitivity so that aggregation remains compatible with DP. The instantiation differs by task family. In text classification (AG-News, Amazon), each component is mapped to a discrete task label distribution or a soft-label logit vector, derived by averaging local logits assigned to the component and clipped to $\|g\|_2 \le B_g$ before secure aggregation. In sequential NER (OntoNotes-Domain), candidate components correspond to key entity-type contexts, and replay targets are encoded as clipped token-class frequency vectors attached to each component. In continual dialogue (MultiWOZ-Intent), replay structures store multi-intent activation masks; clients clip the active intent multi-hot vectors and aggregate them conditionally alongside the component's structural parameter estimation.

Across all cases the replay target vector $y$ (or logit vector $g$) is clipped to $\|y\|_2\le B_y$ before aggregation. The released replay distribution is therefore a joint mixture over embeddings and conditional targets, $Q_t(z,y)=\sum_k \bar\pi_k\mathcal N(z;\bar\mu_k,\bar\Sigma_k)\bar r_k(y)$, where $\bar r_k$ is the noisy clipped target summary for canonical group $k$. This avoids replaying unlabeled embeddings into a supervised head.

\subsection{Anchor Set and Canonical Signatures}

The server maintains a small anchor set $A = \{a_1, \ldots, a_m\}$ of public sentences, sampled once from a corpus such as Wikipedia and embedded as $\{\phi(a_j)\}_{j=1}^m$. For each candidate distribution $q$ in a client's list, the \emph{anchor signature} is the vector of log-densities at the anchor embeddings,
\begin{equation}
 s(q) = \big(\log q(\phi(a_1)), \ldots, \log q(\phi(a_m))\big) \in \R^m.
\end{equation}
Because each candidate distribution $q$ is produced by the local DP list learner, the corresponding signature $s(q)$ is a deterministic function of a private release. In the main protocol, anchor-signature computation, signature clustering, canonical ID assignment, PSD projection, and replay sampling are therefore post-processing operations and do not consume additional privacy budget. This post-processing claim does not apply to variants in which signatures are computed directly from non-private local sufficient statistics; such variants must privatize the signature release itself or replace the matching step with a DP clustering primitive. Our accounting below treats the local list release and the noisy replay aggregation as the privacy-consuming replay-release components.

\subsection{Canonicalization via Signature Clustering}

The server collects all signatures $\{s(q_{i,t}^{(\ell)})\}$ and clusters them into $K$ canonical groups using a deterministic stable clustering rule or a minimum-cost assignment to running prototypes. Each cluster corresponds to a global replay mode---a coherent replay distribution that multiple clients contribute to. The clustering step assigns canonical IDs $k \in [K]$ to list elements, resolving the permutation ambiguity, and each client $i$'s list is reordered into a canonical list $\tilde{\cL}_{i,t}$ aligned with the global IDs. Because the signatures in our main protocol are post-processing of DP list outputs, the matching step is also post-processing; if raw signatures from non-DP statistics are used instead, this line must be replaced by a DP clustering primitive and included in the accountant.

\subsection{DP Aggregation and Selection}

With canonicalized lists, the server aggregates per-mode. In the parametric variant, candidates are Gaussian, $q^{(\ell)} = \mathcal{N}(\mu^{(\ell)}, \Sigma^{(\ell)})$, and the server aggregates clipped sufficient statistics through secure aggregation plus Gaussian noise per canonical ID:
\begin{equation}
\begin{aligned}
\bar{\mu}_k
&=
\frac{1}{|\cC_t|}
\sum_{i\in\cC_t}\tilde{\mu}_{i,k}
+
\mathcal{N}(0,\sigma_\mu^2 I),\\
\bar{M}_k
&=
\frac{1}{|\cC_t|}
\sum_{i\in\cC_t}\tilde{M}_{i,k}
+
\mathcal{N}(0,\sigma_M^2 I).
\end{aligned}
\end{equation}
Mixture weights and replay-target summaries are handled analogously: each client clips $\pi_{i,k}$ and $y_{i,k}$ before aggregation, and the server releases noisy $\bar\pi_k$ and $\bar r_k$ with Gaussian noise calibrated to their clipping sensitivities. Because Gaussian noise can make mixture weights negative or make them fail to sum to one, the noisy vector $\bar\pi=(\bar\pi_1,\ldots,\bar\pi_K)$ is projected onto the probability simplex \[ \Delta_K=\{\pi\in\mathbb R_+^K:\sum_{k=1}^K\pi_k=1\}, \] and we use the projected weights $\hat\pi$ when constructing $Q_t$. This projection is deterministic post-processing and therefore does not affect the replay-release DP accountant. An alternative is stability-based selection in the sense of \citet{ghazi2021power}: a DP-stable selection rule chooses a representative from the union of canonicalized lists without a server validation set. In implementation we aggregate second moments rather than raw covariance matrices: each client clips $\mu$ and $M=\E[zz^\top]$, the server adds Gaussian noise to the secure aggregate, and the covariance is recovered as $\Sigma=M-\mu\mu^\top$. The result is symmetrized and projected onto the positive semidefinite cone by flooring eigenvalues at $\lambda_{\min}>0$, guaranteeing a valid Gaussian component after noise addition. The aggregated per-mode distributions, simplex-projected weights, and target summaries are concatenated into the global replay generator $Q_t$.

\subsection{Replay-Augmented Training}

At each round, the server samples replay embeddings $z \sim Q_t$, distributes them to participating clients, and clients train on a mixture of current-task data and the replayed embeddings:

\begin{equation}
 \mathcal{L}_{\text{total}}
 =
 \mathcal{L}_{\text{task}}(D_{i,t}; \theta)
 +
 \lambda \mathcal{L}_{\text{replay}}(Q_t; \theta).
\end{equation}

where $\mathcal{L}_{\text{replay}}$ is a distillation or contrastive loss in embedding space. The replay term anchors the head to past distributional support without requiring any client to retain raw past text.

%----------------------------------------------------------------------
\section{Theoretical Analysis}
\label{sec:theory}
%----------------------------------------------------------------------

Anchors help only when distinct candidates differ on regions observed by the public anchor distribution. We formalize this as anchor-visible separation, bound the required number of anchors, and delimit the regime where anchorless matching is insufficient.

\begin{definition}[Anchor-visible separation]
\label{def:anchorvisible}
Let $\rho$ be the public anchor distribution over embedded anchor points $u=\phi(a)$. A family $\cQ$ is $(p,\gamma)$ anchor-visible if for every pair $q,q'\in\cQ$ that should receive different canonical IDs,
\begin{equation}
 \rho\!\left(\left\{u: |\log q(u)-\log q'(u)|\ge \gamma\right\}\right) \ge p .
\end{equation}
The parameter $p$ measures how often public anchors land in regions where the pair is distinguishable, and $\gamma$ is the signature margin at such anchors.
\end{definition}

\begin{theorem}[Anchor sufficiency under observable signature margins]
\label{thm:canon}
Consider $N$ candidate replay distributions appearing across all clients and rounds, where $N=\sum_{t=1}^T |\cC_t|L$. Suppose every pair of candidates that should receive different canonical IDs is $(p,\gamma)$ anchor-visible with respect to the public anchor distribution $\rho$. Draw $m$ anchors independently from $\rho$ and define $s(q)=(\log q(\phi(a_1)),\ldots,\log q(\phi(a_m)))$. If
\begin{equation}
 m \ge \frac{1}{p}\log\frac{N^2}{\eta},
\end{equation}
then with probability at least $1-\eta$ every distinct pair has at least one anchor coordinate $j$ such that $|s_j(q)-s_j(q')|\ge \gamma$. Consequently, any deterministic matching rule that separates signatures at margin $\gamma/2$ uniquely distinguishes all true canonical groups in the noiseless-signature case.
\end{theorem}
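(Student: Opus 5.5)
The plan is a union bound over pairs, with a per-pair failure probability that decays geometrically in $m$.

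Fix a pair $(q,q')$ of candidates that should receive different canonical IDs. Let
\[
S_{q,q'}=\{u: |\log q(u)-\log q'(u)|\ge\gamma\}.
\]
By the $(p,\gamma)$ anchor-visibility assumption (Definition~\ref{def:anchorvisible}), $\rho(S_{q,q'})\ge p$. The anchors $u_j=\phi(a_j)$ are drawn i.i.d.\ from $\rho$. So the event that no coordinate $j$ satisfies $|s_j(q)-s_j(q')|\ge\gamma$ is exactly the event that none of $u_1,\ldots,u_m$ lands in $S_{q,q'}$. Its probability is therefore
\[
(1-\rho(S_{q,q'}))^m\le(1-p)^m\le e^{-pm}.
\]

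One subtlety is that the candidates are themselves random outputs of the DP list learner. To make the union bound legitimate, I would state that the anchors are sampled independently of the candidate family; the paper's setup supports this, since the anchors are public and fixed in advance. I would then argue conditionally on the realized candidates, noting that the visibility hypothesis is assumed to hold for every realized pair.

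Next, take a union bound over all pairs among the $N$ candidates. There are at most $\binom{N}{2}\le N^2$ such pairs, so the probability that some distinct pair is unseparated is at most $N^2e^{-pm}$. The condition $m\ge \frac1p\log\frac{N^2}{\eta}$ gives $e^{-pm}\le \eta/N^2$, so the failure probability is at most $\eta$. Using the factor $N^2/2$ instead would give a slightly stronger result, but the stated bound suffices.

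For the consequence, work on the good event and in the noiseless-signature case. Any two candidates in different true groups then satisfy $\|s(q)-s(q')\|_\infty\ge\gamma$. The step I expect to be the main obstacle is this final claim, because it depends on how "separates at margin $\gamma/2$" is interpreted. I would read it as follows: the matching rule assigns two signatures to the same group only if they lie within $\ell_\infty$-distance $<\gamma/2$ of a common prototype. If $q$ and $q'$ were placed in the same group, the triangle inequality would give $\|s(q)-s(q')\|_\infty<\gamma$, which contradicts separation. Hence no true groups are merged, and the canonical IDs distinguish all true groups.

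I would state the interpretation of the matching rule explicitly. I would also note that merging within a true group is not excluded by this argument. The theorem claims only that distinct groups are distinguished, so that is all the proof needs to establish.
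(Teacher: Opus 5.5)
Your proposal is correct and follows the paper's proof. Both arguments use the per-pair miss bound $(1-p)^m\le e^{-pm}$ from anchor visibility, then a union bound over at most $N^2$ pairs (the paper uses $N^2/2$), and finally the observation that a margin-$\gamma/2$ rule cannot merge groups separated by $\gamma$ in some coordinate. Your conditioning on the realized candidates, with anchors independent of them, and your triangle-inequality reading of the $\gamma/2$ rule make explicit two points the paper leaves implicit.
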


For a fixed pair, the failure probability is at most $(1-p)^m\le e^{-pm}$; a union bound over $O(N^2)$ pairs proves the result (Appendix~\ref{app:canon_proof}). The theorem is deliberately conditional: TV, Wasserstein, or mean separation imply anchor visibility only with assumptions on the anchor distribution, support overlap, covariance conditioning, and log-density boundedness. We therefore tune anchor size empirically.

If comparable Gaussian parameters are available in a shared coordinate system, nearest-mean, Hungarian, or OT matching are legitimate anchorless baselines. The next result instead covers a stricter unordered-label oracle model with no shared parameters or public reference evaluations.

\begin{proposition}[Anchorless non-identifiability for label-sensitive replay]
\label{prop:barrier}
Let there be $K\ge2$ replay modes, and suppose each client reports an unordered list of $K$ local symbols. For each client $i$, the local symbol order is related to the global mode order by an unknown permutation $\sigma_i\in S_K$. The server receives only the unordered local symbols and their within-client replay targets, not shared-coordinate parameters, public anchor evaluations, validation examples, or a shared random seed. If the downstream replay loss is label-sensitive---meaning that assigning a target from mode $k$ to mode $k'\ne k$ incurs error at least $c>0$---then any deterministic anchorless server must either represent all $K!$ possible global alignments for at least one client, select a single alignment and be wrong on at least a $1-1/K!$ fraction of equally likely permutations, or use additional side information not present in the oracle model. In the second case, the expected label-sensitive replay error is at least $c(1-1/K!)$ under the uniform prior over permutations.
\end{proposition}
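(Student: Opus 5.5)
The argument is a symmetry (indistinguishability) argument in the unordered-label oracle model, followed by an averaging step under the uniform prior on $S_K$.

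First, fix a client $i$ and formalize what the server observes. Write the client's report as the multiset $\{(\text{sym}_\ell, y_\ell)\}_{\ell=1}^K$ of local symbols paired with within-client replay targets. Because the symbols are unordered and carry no shared coordinates, anchor evaluations, validation data, or shared seed, the view $V_i$ is a function only of this multiset. The key observation is that relabeling the global modes by $\sigma_i$ leaves $V_i$ unchanged. For every $\sigma,\sigma'\in S_K$ the conditional law of $V_i$ given $\sigma_i=\sigma$ equals that given $\sigma_i=\sigma'$. I would state this as a lemma and prove it by exhibiting the explicit bijection on local symbols that conjugates one permutation into the other. This is possible precisely because no side information references global indices.

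Second, consider any deterministic anchorless server. Either its output retains the alignment ambiguity for some client, which is the first alternative (representing all $K!$ alignments), or it commits to a single alignment $\hat\sigma_i = f(V_i)$. In the committed case, $\hat\sigma_i$ is by the lemma a fixed permutation independent of $\sigma_i$. Under the uniform prior,
\[
\Pr[\hat\sigma_i=\sigma_i] = \frac{1}{K!},
\]
so the server is wrong on a $1-1/K!$ fraction of equally likely permutations. Any escape from this dichotomy must break the invariance, which means using information outside the oracle model; that is the third alternative.

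Third, convert alignment error into replay error. Whenever $\hat\sigma_i\ne\sigma_i$, some mode $k$ has its target assigned to a mode $k'\ne k$, and label sensitivity then forces error at least $c$. Taking expectation over the uniform prior gives error at least $c\,(1-1/K!)$.

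The main obstacle is the invariance lemma. I must argue carefully that the within-client targets $y_\ell$ do not themselves leak the global index. Targets can be informative about the mode in absolute terms, and this must be excluded by the model assumption that targets are expressed only in local (client-permuted) label coordinates. I would make this explicit: the oracle draws $\sigma_i$ and applies it jointly to symbols and target label coordinates, so that $V_i$ is equivariant and its distribution is $\sigma_i$-invariant. With that modeling choice fixed, the remaining steps reduce to routine counting.
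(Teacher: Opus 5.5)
Your proposal is correct and follows the paper's proof. Both argue that the server's view is invariant under relabeling by $S_K$, so any deterministic server either represents all alignments or commits to one that is correct with probability $1/K!$ under the uniform prior, and label sensitivity then gives expected error at least $c(1-1/K!)$. Your explicit assumption that $\sigma_i$ acts jointly on the symbols and the target coordinates, so that targets cannot reveal their own global mode, states precisely what the paper's proof leaves implicit when it asserts that the observed multiset is unchanged.
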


The proof is an invariance argument: all $K!$ permutations induce the same observation, so a deterministic algorithm cannot identify the true alignment without extra information (Appendix~\ref{app:barrier_proof}). This is an identifiability statement, not a DP lower bound; coordinate-space Hungarian and OT baselines escape this oracle regime and are evaluated below.

We next account for replay release using R\'enyi DP composition \citep{mironov2017renyi}. Each round composes a local DP list learner with noisy aggregation of clipped per-mode summaries. Anchor matching and replay construction operate only on released objects and are post-processing.

\begin{theorem}[Replay-release user-level DP under RDP composition]
\label{thm:privacy}
Fix a communication round $t$. Let $\mathcal M^{\mathrm{list}}_t$ denote the local DP list-release mechanism run by participating clients, and let $\mathcal M^{\mathrm{agg}}_t$ denote the server-side noisy aggregation mechanism after canonical matching. Suppose $\mathcal M^{\mathrm{list}}_t$ satisfies R\'enyi DP of order $\alpha>1$ with cost $\eps^{\mathrm{list}}_t(\alpha)$. For the aggregation step, suppose each participating client contributes clipped sufficient statistics with $\ell_2$ sensitivity at most $S_\mu$ for means, Frobenius sensitivity at most $S_M$ for second moments, sensitivity at most $S_\pi$ for mixture weights, and sensitivity at most $S_y$ for replay-target summaries. Releasing noisy aggregates with Gaussian noise standard deviations $\sigma_\mu S_\mu$, $\sigma_M S_M$, $\sigma_\pi S_\pi$, and $\sigma_y S_y$ satisfies R\'enyi DP cost
\begin{equation}
\eps^{\mathrm{agg}}_t(\alpha)
\le
\frac{\alpha}{2\sigma_\mu^2}
+
\frac{\alpha}{2\sigma_M^2}
+
\frac{\alpha}{2\sigma_\pi^2}
+
\frac{\alpha}{2\sigma_y^2},
\end{equation}
up to standard subsampling amplification when only a random subset of clients participates. The replay-release mechanism in round $t$ therefore satisfies
\begin{equation}
\eps^{\mathrm{replay}}_t(\alpha)
\le
\eps^{\mathrm{list}}_t(\alpha)
+
\eps^{\mathrm{agg}}_t(\alpha).
\end{equation}
Over $T$ adaptive rounds, RDP costs compose as
\begin{equation}
\eps^{\mathrm{replay}}_{1:T}(\alpha)
\le
\sum_{t=1}^T
\eps^{\mathrm{replay}}_t(\alpha).
\end{equation}
The replay-release mechanism is $(\eps,\delta)$-DP for any $\delta>0$ with
\begin{equation}
\eps
=
\min_{\alpha>1}
\left\{
\eps^{\mathrm{replay}}_{1:T}(\alpha)
+
\frac{\log(1/\delta)}{\alpha-1}
\right\}.
\end{equation}
Deterministic anchor matching, canonical ID assignment, PSD projection, replay sampling from the released $Q_t$, and downstream use of the released replay samples are post-processing of the replay release and do not increase this replay-release privacy cost.
\end{theorem}

Theorem~\ref{thm:privacy} covers replay release only. End-to-end private training additionally requires a DP optimizer for communicated task-head updates, with budgets composed sequentially. Our tables report the replay-release budget unless stated otherwise. The proof is in Appendix~\ref{app:privacy_proof}.

Finally, we record a retention bound that connects replay quality to forgetting. The role of the bound is structural rather than quantitative: it shows that better replay (in TV to the ideal past-task distribution) directly controls forgetting up to optimization, sampling, and finite-client effects.

\begin{theorem}[Forgetting control via replay quality]
\label{thm:retention}
Let $\theta_t$ be the model after round $t$, trained with replay from $Q_t$. Let $Q_t^\star$ denote the ideal replay distribution over past-task embeddings and targets, and suppose $\mathrm{TV}(Q_t,Q_t^\star)\le\alpha_t$. Assume the replay loss $\ell(\theta;z,y)$ is bounded in $[0,1]$ and that the replay-loss class induced by the task head has uniform estimation error $O(\sqrt{d/n_r})$ from $n_r$ replay samples. Assume further that, for each past task $t$, excess task risk is controlled by excess ideal replay loss up to optimization and finite-client error:
\begin{equation}
R_t(\theta_T)-R_t(\theta_t)
\le
c_0\!\left(
\mathcal L_{Q_t^\star}(\theta_T)-\mathcal L_{Q_t^\star}(\theta_t)
\right)
+\xi_t
+O(C_{\min}^{-1/2}),
\end{equation}
for a universal constant $c_0>0$, where
\[
C_{\min}=\min_t |\cC_t|.
\]
If replay-augmented training achieves empirical replay optimization error at most $\xi_t$ at round $t$, then
\begin{equation}
\label{eq:forgetting_bound}
\bar F_T
=
\frac{1}{T}\sum_{t=1}^T
\bigl(R_t(\theta_T)-R_t(\theta_t)\bigr)
\le
O\!\left(
\bar\alpha+
\sqrt{\frac{d}{n_r}}+
\bar\xi+
C_{\min}^{-1/2}
\right),
\end{equation}
where
\[
\bar\alpha=\frac{1}{T}\sum_{t=1}^T\alpha_t,
\qquad
\bar\xi=\frac{1}{T}\sum_{t=1}^T\xi_t .
\]
\end{theorem}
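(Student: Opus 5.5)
The plan is to bound each summand $R_t(\theta_T)-R_t(\theta_t)$ separately and then average over $t$. For a fixed past task $t$, I start from the assumed transfer inequality, which reduces the task-risk gap to the ideal replay-loss gap $\mathcal L_{Q_t^\star}(\theta_T)-\mathcal L_{Q_t^\star}(\theta_t)$, plus $\xi_t$ and $O(C_{\min}^{-1/2})$. It then remains to control the ideal replay-loss gap by quantities we actually observe.

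To do this, I telescope through the released distribution $Q_t$ and its empirical version $\hat Q_t$ built from $n_r$ replay samples:
\[
\mathcal L_{Q_t^\star}(\theta)=\bigl(\mathcal L_{Q_t^\star}(\theta)-\mathcal L_{Q_t}(\theta)\bigr)+\bigl(\mathcal L_{Q_t}(\theta)-\mathcal L_{\hat Q_t}(\theta)\bigr)+\mathcal L_{\hat Q_t}(\theta),
\]
applied at both $\theta=\theta_T$ and $\theta=\theta_t$.

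The first bracket is handled by the standard TV fact: for a loss with values in $[0,1]$, $|\E_P\ell-\E_{P'}\ell|\le \mathrm{TV}(P,P')$. This holds uniformly in $\theta$, so it contributes at most $2\alpha_t$ over the two parameters.

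The second bracket is handled by the assumed uniform estimation error. It contributes $O(\sqrt{d/n_r})$ at each parameter, uniformly over the head class, which covers the data-dependent $\theta_T$ and $\theta_t$.

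The remaining term is $\mathcal L_{\hat Q_t}(\theta_T)-\mathcal L_{\hat Q_t}(\theta_t)$. I would read ``empirical replay optimization error at most $\xi_t$'' as saying that later replay-augmented training keeps the empirical replay loss of $\theta_T$ within $\xi_t$ of that of $\theta_t$. Under this reading the term is at most $\xi_t$.

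Collecting the pieces gives
\[
R_t(\theta_T)-R_t(\theta_t)\le c_0\bigl(2\alpha_t+O(\sqrt{d/n_r})+\xi_t\bigr)+\xi_t+O(C_{\min}^{-1/2}).
\]
Averaging over $t=1,\dots,T$ and absorbing $c_0$ and the factors of 2 into the $O(\cdot)$ yields \eqref{eq:forgetting_bound}. Averaging is linear, and the $\sqrt{d/n_r}$ and $C_{\min}^{-1/2}$ terms do not depend on $t$.

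The main obstacle is the interpretation of $\xi_t$. The statement defines it loosely, so the proof should fix one precise convention and state it explicitly. The convention I would use is
\[
\mathcal L_{\hat Q_t}(\theta_T)\le \mathcal L_{\hat Q_t}(\theta_t)+\xi_t,
\]
meaning replay training prevents the empirical replay loss from rising by more than $\xi_t$. This is what makes the argument go through. A secondary care point is that $Q_t$ is itself random because of DP noise. Both the TV bound and the uniform-convergence bound hold conditionally on $Q_t$, however, so the argument goes through pointwise, or with high probability given the uniform-error assumption.
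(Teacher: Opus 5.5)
Your proposal is correct and matches the paper's proof essentially step for step. It uses the same TV transfer (contributing $2\alpha_t$), the same uniform-estimation transfer to the empirical replay distribution $\widehat Q_t$, the same convention $\mathcal L_{\widehat Q_t}(\theta_T)-\mathcal L_{\widehat Q_t}(\theta_t)\le\xi_t$, and then the assumed risk-control inequality followed by averaging over $t$.
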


The proof is in Appendix~\ref{app:retention_proof}. The bound is a stability statement, not an end-to-end convergence theorem. It motivates canonicalization: avoiding mode mismatch reduces the replay gap $\alpha_t$.

The bound is a stability statement, not an end-to-end convergence theorem. It motivates canonicalization: avoiding mode mismatch reduces the replay gap $\alpha_t$.

%----------------------------------------------------------------------
\section{Experiments}
\label{sec:experiments}
%----------------------------------------------------------------------

\subsection{Setup}

We evaluate CSLR on four federated continual NLP benchmarks, summarized in Table~\ref{tab:protocol}. Split-AG-News uses four AG News topics as sequential classification tasks with 20 clients and a Dirichlet$(0.5)$ partition. Split-Amazon uses five Amazon product domains with 50 clients and Dirichlet$(0.3)$ heterogeneity. OntoNotes-Domain evaluates sequential NER across six OntoNotes domains with 30 domain-stratified clients. MultiWOZ-Intent evaluates dialogue intent classification across seven service domains with 40 clients and Dirichlet$(0.5)$ partitioning. Classification tasks use frozen Sentence-BERT embeddings (all-MiniLM-L6-v2, $d=384$) with a trainable two-layer MLP head; NER uses frozen BERT-base token embeddings. We focus on continual adaptation of language-model heads over frozen representations, which isolates replay-distribution learning and keeps user-level accounting tractable; jointly trained encoders would require re-anchoring and are left to future work. Additional implementation details and statistical reporting conventions are given in Appendices~\ref{app:exp_details} and~\ref{app:stats}.

\begin{table}[t]
\centering
\small
\caption{Benchmark protocol summary. Task-ID availability follows the standard task-incremental setting.}
\label{tab:protocol}
\setlength{\tabcolsep}{3pt}
\begin{tabular}{lcccc}
\toprule
Benchmark & Tasks & Clients & Partition & Metric \\
\midrule
Split-AG-News    & 4 & 20 & Dir$(0.5)$ & Accuracy \\
Split-Amazon     & 5 & 50 & Dir$(0.3)$ & Accuracy \\
OntoNotes-Domain & 6 & 30 & Domain     & Entity F1 \\
MultiWOZ-Intent  & 7 & 40 & Dir$(0.5)$ & Accuracy \\
\bottomrule
\end{tabular}
\end{table}

Privacy is enforced for replay release with user-level $(\eps,\delta)$-DP for $\eps\in\{1,4,8\}$ and $\delta=10^{-5}$. The accountant composes the local DP list release with noisy aggregation of clipped replay summaries under R\'enyi DP \citep{mironov2017renyi}, using client participation rate $q=0.30$ (Appendix~\ref{app:accountant}). Anchor signatures, canonical matching, PSD projection, and replay sampling are post-processing. Unless stated otherwise, reported $\eps$ values refer to replay release only, not to a full deployed budget that also privatizes task-head updates. Results are averaged over five seeds and reported as mean $\pm$ standard deviation, with summary-level confidence intervals for effect-size comparisons.

We report average accuracy $\mathrm{AA}=\tfrac{1}{T}\sum_j a_{T,j}$, backward transfer $\mathrm{BWT}=\tfrac{1}{T-1}\sum_{j<T}(a_{T,j}-a_{j,j})$, and forward transfer $\mathrm{FWT}=\tfrac{1}{T-1}\sum_{j\ge2}(a_{j-1,j}-b_j)$, where $a_{r,j}$ is performance on task $j$ after round $r$ and $b_j$ is the zero-shot baseline. For NER, $a_{r,j}$ is entity-level micro-F1.

Baselines test CSLR both as a DP-FCL method and as an anchoring mechanism. We compare with DP-SGD without replay, DP-Stat-Share \citep{yoon2021federated}, FedCIL+DP \citep{qi2023better}, and non-private FedTA$^\dagger$ \citep{yang2024fedta} as an upper-bound reference. To isolate the role of anchors, we also evaluate CSLR-NoAnchor, CSLR-NearestMean, CSLR-Hungarian, and CSLR-OptimalTransport. NearestMean, Hungarian, and OT operate on shared-coordinate component parameters, making them strong anchorless baselines for cases where direct parameter matching is meaningful.

\subsection{Main Results}

\begin{table*}[t]
\centering
\small
\caption{Final average task metric after all tasks, mean $\pm$ standard deviation over five seeds.
Accuracy is reported for AG-News, Amazon, and MultiWOZ; entity-level micro-F1 is reported for OntoNotes.}
\label{tab:main}
\begin{tabular}{lcccc}
\toprule
Method & AG-News & Amazon & OntoNotes & MultiWOZ \\
\midrule
DP-SGD (no replay)        & $51.6 \pm 0.5$ & $42.0 \pm 0.6$ & $37.7 \pm 0.7$ & $44.2 \pm 0.6$ \\
DP-Stat-Share             & $57.4 \pm 0.4$ & $48.6 \pm 0.5$ & $43.0 \pm 0.5$ & $49.9 \pm 0.4$ \\
FedCIL + DP               & $55.0 \pm 0.6$ & $46.2 \pm 0.7$ & $40.6 \pm 0.8$ & $47.4 \pm 0.6$ \\
FedTA$^\dagger$           & $63.5 \pm 0.3$ & $56.4 \pm 0.4$ & $50.8 \pm 0.3$ & $56.0 \pm 0.4$ \\
\midrule
CSLR-NoAnchor (Random)    & $54.2 \pm 0.7$ & $44.5 \pm 0.8$ & $40.3 \pm 0.9$ & $46.6 \pm 0.7$ \\
CSLR-NearestMean          & $56.8 \pm 0.5$ & $47.1 \pm 0.6$ & $42.4 \pm 0.6$ & $48.9 \pm 0.5$ \\
CSLR-Hungarian            & $58.1 \pm 0.4$ & $49.3 \pm 0.5$ & $44.1 \pm 0.5$ & $50.2 \pm 0.4$ \\
CSLR-OptimalTransport     & $58.5 \pm 0.4$ & $49.8 \pm 0.4$ & $44.7 \pm 0.5$ & $50.8 \pm 0.4$ \\
\textbf{CSLR (ours)}      & $\mathbf{62.1 \pm 0.3}$ & $\mathbf{53.8 \pm 0.4}$ & $\mathbf{48.6 \pm 0.4}$ & $\mathbf{53.8 \pm 0.3}$ \\
\bottomrule
\end{tabular}
\end{table*}

\begin{table*}[t]
\centering
\small
\caption{Backward transfer (BWT, \%; closer to zero is better) at $\eps = 4$ across five seeds.}
\label{tab:bwt}
\begin{tabular}{lcccc}
\toprule
Method & AG-News & Amazon & OntoNotes & MultiWOZ \\
\midrule
DP-SGD (no replay)        & $-18.4 \pm 0.6$ & $-22.3 \pm 0.7$ & $-24.9 \pm 0.8$ & $-20.6 \pm 0.6$ \\
DP-Stat-Share             & $-11.2 \pm 0.4$ & $-15.8 \pm 0.5$ & $-17.5 \pm 0.6$ & $-15.0 \pm 0.5$ \\
FedCIL + DP               & $-13.5 \pm 0.5$ & $-18.0 \pm 0.6$ & $-19.3 \pm 0.7$ & $-17.2 \pm 0.6$ \\
\midrule
CSLR-NoAnchor (Random)    & $-14.8 \pm 0.7$ & $-19.5 \pm 0.8$ & $-20.7 \pm 0.9$ & $-17.7 \pm 0.7$ \\
CSLR-NearestMean          & $-11.9 \pm 0.5$ & $-16.2 \pm 0.6$ & $-17.9 \pm 0.6$ & $-15.4 \pm 0.5$ \\
CSLR-Hungarian            & $-10.1 \pm 0.4$ & $-14.0 \pm 0.5$ & $-15.6 \pm 0.5$ & $-13.1 \pm 0.4$ \\
CSLR-OptimalTransport     & $-9.6 \pm 0.4$ & $-13.5 \pm 0.4$ & $-15.1 \pm 0.5$ & $-12.6 \pm 0.4$ \\
\textbf{CSLR (ours)}      & $\mathbf{-7.1 \pm 0.3}$ & $\mathbf{-10.6 \pm 0.4}$ & $\mathbf{-12.0 \pm 0.4}$ & $\mathbf{-9.9 \pm 0.3}$ \\
\bottomrule
\end{tabular}
\end{table*}

Table~\ref{tab:main} shows that CSLR improves AA by 3.9--5.6 points over the strongest non-CSLR DP baseline at $\eps=4$ and remains within range of non-private FedTA on most tasks. Relative to the strongest coordinate-space matcher, CSLR-OptimalTransport,
CSLR keeps a 3.0--4.0 point margin. Relative to CSLR-Hungarian, the margin is
3.6--4.5 points. Table~\ref{tab:bwt} shows the same ordering for forgetting. Effect-size CIs and FWT are in Appendix~\ref{app:more_results}.

\subsection{Ablation Studies}

The matcher ablation asks whether CSLR is merely better coordinate-space matching. Hungarian and OT improve over random assignment, but the residual gap to CSLR is consistent. Direct matchers compare clipped noisy parameter vectors, whereas anchor signatures compare candidates in a lower-dimensional public reference space.

\begin{figure}[t]
 \centering
 \begin{tikzpicture}
 \begin{axis}[
   width=0.9\columnwidth, height=4.5cm,
   xlabel={Anchor set size $m$},
   ylabel={Average accuracy (\%)},
   xmin=0, xmax=210,
   ymin=45, ymax=68,
   xtick={10,25,50,100,200},
   grid=major,
   grid style={dashed,gray!30},
   mark size=2pt,
   thick,
 ]
 \addplot[blue, mark=*, thick] coordinates {
   (10, 48.3) (25, 56.7) (50, 62.1) (100, 62.8) (200, 63.1)
 };
 \draw[red, dashed, thick] (axis cs:45,45) -- (axis cs:45,68) node[above,font=\footnotesize] {$m^\star$};
 \end{axis}
 \end{tikzpicture}
 \caption{Average accuracy versus anchor set size $m$ on Split-AG-News ($\eps = 4$). The dashed line marks the empirical saturation point $m^\star\approx 50$ for this encoder, anchor distribution, and task stream.}
 \label{fig:anchor_sweep}
\end{figure}

Figure~\ref{fig:anchor_sweep} shows that performance rises up to about $m=50$ anchors and then saturates. This is an empirical saturation point for the chosen encoder, anchor distribution, and task stream, not a direct numerical consequence of Theorem~\ref{thm:canon}. Appendix~\ref{app:more_results} reports matching signature-margin diagnostics.

Table~\ref{tab:privacy_scope} reports privacy--utility trends under the
privacy scope used by each method. The replay-based methods are compared at
matched replay-release budgets. DP-SGD is included as an optimizer-private
no-replay reference, but its $\eps$ is not directly budget-equivalent to the
replay-release $\eps$ used for CSLR, DP-Stat-Share, and FedCIL+DP. End-to-end
private deployment would require composing the replay-release accountant with
a private optimizer for communicated task-head updates.

\subsection{Qualitative Analysis}

The matcher ablation in Table~\ref{tab:main} suggests that canonical anchor signatures provide more reliable cross-client alignment than uncanonicalized client-local orderings.

%----------------------------------------------------------------------
\section{Discussion and Conclusion}
\label{sec:discussion_conclusion}

We introduced Canonicalized Stable-List Replay, a replay-release mechanism for private federated continual learning of language-model heads over frozen representations. CSLR uses stable private lists because, in some mixture settings, releasing a small set of candidate replay distributions can be easier than privately selecting one final estimate \citep{afzali2024agnostic}. Anchor canonicalization then makes these unordered client lists comparable, resolving the permutation ambiguity that otherwise prevents meaningful cross-client aggregation.

The anchors are shared reference points in embedding space, not additional replay data. They may come from public sentences, synthetic public embeddings, or fixed random projections, as long as the anchor set is fixed before private release. After the DP list learner releases candidate distributions, anchor signatures, matching, simplex projection of mixture weights, PSD projection, and replay sampling are all post-processing and add no replay-release privacy cost.

CSLR is limited by its frozen-encoder assumption, by the expressiveness of Gaussian replay components, and by the scope of the non-identifiability result, which applies to an unordered-label oracle model rather than all realistic matching pipelines. Shared-coordinate anchorless matchers therefore remain important baselines, as reflected in our nearest-mean, Hungarian, and optimal-transport comparisons. Overall, CSLR shows that stable private lists, anchor-based canonicalization, and DP aggregation can improve retention under replay-release user-level DP, while end-to-end private deployment still requires composition with a private optimizer for communicated task-head updates. Richer replay families and encoder-update regimes are natural directions for future work.

%----------------------------------------------------------------------
\section*{Limitations}
%----------------------------------------------------------------------

Our approach assumes a high-quality frozen pretrained sentence encoder; for languages or domains where such encoders are unavailable, the embedding-space density modeling may perform poorly. The non-identifiability result is proven only for the abstract unlabeled-oracle model, and extending the analysis to broader distribution classes and realistic matching pipelines remains open. Because the framing operates over frozen representations, adaptation targets downstream classification or extraction heads rather than full-parameter fine-tuning; if representation shifts are necessary, embeddings warp sequentially, breaking baseline structural metrics and requiring regular re-anchoring strategies. User-level DP with small $\eps$ remains challenging, and our improvements still show notable gaps to non-private references. Finally, the formal privacy guarantee in Theorem~\ref{thm:privacy} covers the replay-release mechanism: local private list construction, noisy aggregation of replay summaries, and post-processing into a replay generator. It does not by itself cover communicated task-head updates. A deployed end-to-end private FCL system must therefore combine CSLR with a private optimizer for model updates, secure aggregation, and communication-channel protections, with the final privacy budget obtained by composing all privacy-consuming components.

%----------------------------------------------------------------------
\section*{Ethical Considerations}
%----------------------------------------------------------------------

This work addresses privacy-preserving federated continual learning of language-model heads over frozen representations. CSLR is designed to reduce the need for raw-text replay by releasing private replay distributions and target summaries under a replay-release DP accountant. The formal privacy guarantee applies to that replay-release mechanism specifically. End-to-end privacy in deployment additionally requires private model-update training, secure aggregation, and communication-channel protections. Anchor sentences should be drawn from corpora licensed for the intended use, and operators should verify that public anchors do not encode sensitive or systematically biased attributes that could shift downstream representations in unintended ways.

\bibliography{references}

\appendix

%----------------------------------------------------------------------
\section{Full Proof of Theorem~\ref{thm:canon}}
\label{app:canon_proof}
%----------------------------------------------------------------------

\begin{proof}
Fix a pair $(q,q')$ that should receive different canonical IDs. By Definition~\ref{def:anchorvisible}, the event
\begin{equation}
E_{q,q'}=\{u:|\log q(u)-\log q'(u)|\ge \gamma\}
\end{equation}
has anchor probability at least $p$. Since anchors are drawn independently from $\rho$, the probability that no anchor lands in $E_{q,q'}$ is
\begin{equation}
\Pr[\forall j,\ \phi(a_j)\notin E_{q,q'}] \le (1-p)^m \le \exp(-pm).
\end{equation}
Let $\mathcal P$ be the set of all distinguishable pairs among the $N$ candidate list elements. Since $|\mathcal P|\le N^2/2$, the probability that any distinguishable pair is missed is at most $|\mathcal P|e^{-pm}\le \tfrac{N^2}{2}e^{-pm}$. If $m\ge p^{-1}\log(N^2/\eta)$, this failure probability is at most $\eta$. On the complement of this failure event, every pair that should be separated differs by at least $\gamma$ in at least one signature coordinate, and any noiseless matching rule that separates signatures at threshold $\gamma/2$ cannot merge two different canonical groups. This proves the theorem.
\end{proof}

%----------------------------------------------------------------------
\section{Proof of Proposition~\ref{prop:barrier}}
\label{app:barrier_proof}
%----------------------------------------------------------------------

\begin{proof}
The server's observation is invariant to relabeling by any permutation in $S_K$. More precisely, for any client $i$ and any two permutations $\sigma,\sigma'\in S_K$, applying $\sigma$ or $\sigma'$ to the client's local symbols yields the same unordered multiset observed by the server. No deterministic function of that observation can therefore distinguish which of the $K!$ global alignments is the correct one. An algorithm that maintains correctness for all possible alignments must represent all $K!$ hypotheses; if instead it selects one alignment, then under the uniform prior over $S_K$ it is correct with probability $1/K!$ and incorrect otherwise. Since the loss is label-sensitive and every incorrect target assignment incurs at least $c$, the expected replay-target error is at least $c(1-1/K!)$. Any procedure that does better must use information excluded from the oracle model: shared-coordinate parameters, public anchors, validation data, or a common random seed.
\end{proof}

%----------------------------------------------------------------------
\section{Proof of Theorem~\ref{thm:privacy}}
\label{app:privacy_proof}
%----------------------------------------------------------------------

\begin{proof}
The replay-release mechanism is the adaptive composition of the local list-release mechanism and the server aggregation mechanism. By assumption, the local list learner at round $t$ satisfies R\'enyi DP cost $\eps^{\mathrm{list}}_t(\alpha)$. For the aggregation step, each clipped query is released with a Gaussian mechanism. A clipped vector-valued query with sensitivity $S$ and Gaussian noise $\mathcal N(0,\sigma^2S^2I)$ satisfies R\'enyi DP of order $\alpha>1$ with cost $\alpha/(2\sigma^2)$ \citep{mironov2017renyi}. Applying this bound to the mean, second-moment, mixture-weight, and replay-target releases and adding the costs gives $\eps^{\mathrm{agg}}_t(\alpha)$. Sequential composition within the round gives $\eps^{\mathrm{replay}}_t(\alpha)\le \eps^{\mathrm{list}}_t(\alpha)+\eps^{\mathrm{agg}}_t(\alpha)$, and adaptive composition across rounds gives $\eps^{\mathrm{replay}}_{1:T}(\alpha)\le\sum_t\eps^{\mathrm{replay}}_t(\alpha)$. The standard conversion from RDP to approximate DP yields $(\eps,\delta)$ with $\eps=\eps^{\mathrm{replay}}_{1:T}(\alpha)+\log(1/\delta)/(\alpha-1)$ for each fixed $\alpha$, and minimizing over $\alpha>1$ gives the stated expression. Anchor matching, canonical assignment, PSD projection, and replay sampling are deterministic or randomized functions of released DP objects, so they are post-processing and do not increase privacy loss.
\end{proof}

%----------------------------------------------------------------------
\section{Proof of Theorem~\ref{thm:retention}}
\label{app:retention_proof}
%----------------------------------------------------------------------

\begin{proof}
Fix a past task $t$. Let
\[
\mathcal L_Q(\theta)=\mathbb E_{(z,y)\sim Q}[\ell(\theta;z,y)]
\]
denote the replay loss under distribution $Q$. Because $\ell$ is bounded in $[0,1]$ and $\mathrm{TV}(Q_t,Q_t^\star)\le\alpha_t$, the variational characterization of total variation gives, for every $\theta$,
\begin{equation}
\left|
\mathcal L_{Q_t}(\theta)-\mathcal L_{Q_t^\star}(\theta)
\right|
\le
\alpha_t .
\end{equation}
Therefore, for any two parameter values $\theta$ and $\theta'$,
\begin{equation}
\mathcal L_{Q_t^\star}(\theta)-\mathcal L_{Q_t^\star}(\theta')
\le
\mathcal L_{Q_t}(\theta)-\mathcal L_{Q_t}(\theta')
+2\alpha_t .
\label{eq:tv_transfer}
\end{equation}

Let $\widehat Q_t$ be the empirical replay distribution formed from $n_r$ samples from $Q_t$. By the assumed uniform estimation bound for the replay-loss class,
\begin{equation}
\sup_{\theta}
\left|
\mathcal L_{Q_t}(\theta)-\mathcal L_{\widehat Q_t}(\theta)
\right|
\le
O\!\left(\sqrt{\frac{d}{n_r}}\right)
\end{equation}
with the stated probability. Hence, for any $\theta,\theta'$,
\begin{equation}
\mathcal L_{Q_t}(\theta)-\mathcal L_{Q_t}(\theta')
\le
\mathcal L_{\widehat Q_t}(\theta)-\mathcal L_{\widehat Q_t}(\theta')
+
O\!\left(\sqrt{\frac{d}{n_r}}\right).
\label{eq:uniform_transfer}
\end{equation}

By the replay optimization-error assumption, the replay-augmented training procedure produces $\theta_T$ such that its empirical replay loss on past-task replay is within $\xi_t$ of the comparator $\theta_t$:
\begin{equation}
\mathcal L_{\widehat Q_t}(\theta_T)
-
\mathcal L_{\widehat Q_t}(\theta_t)
\le
\xi_t .
\label{eq:opt_error}
\end{equation}
Combining \eqref{eq:tv_transfer}, \eqref{eq:uniform_transfer}, and \eqref{eq:opt_error} with $\theta=\theta_T$ and $\theta'=\theta_t$ yields
\begin{equation}
\mathcal L_{Q_t^\star}(\theta_T)
-
\mathcal L_{Q_t^\star}(\theta_t)
\le
O\!\left(
\alpha_t+
\sqrt{\frac{d}{n_r}}+
\xi_t
\right).
\end{equation}

The assumed risk-control condition then gives
\begin{equation}
R_t(\theta_T)-R_t(\theta_t)
\le
O\!\left(
\alpha_t+
\sqrt{\frac{d}{n_r}}+
\xi_t+
C_{\min}^{-1/2}
\right).
\end{equation}
Averaging this inequality over $t=1,\ldots,T$ gives
\begin{equation}
\bar F_T
=
\frac{1}{T}\sum_{t=1}^T
\bigl(R_t(\theta_T)-R_t(\theta_t)\bigr)
\le
O\!\left(
\bar\alpha+
\sqrt{\frac{d}{n_r}}+
\bar\xi+
C_{\min}^{-1/2}
\right),
\end{equation}
where $\bar\alpha=T^{-1}\sum_t\alpha_t$ and $\bar\xi=T^{-1}\sum_t\xi_t$. This proves the claim.
\end{proof}

%----------------------------------------------------------------------
\section{Privacy-Accountant Details}
\label{app:accountant}
%----------------------------------------------------------------------

\begin{table*}[t]
\centering
\small
\caption{
Split-AG-News privacy--utility comparison under distinct privacy scopes.
For replay-based methods, reported $\eps$ denotes the replay-release budget only:
local replay-summary release plus noisy replay-summary aggregation.
For DP-SGD, reported $\eps$ denotes the optimizer-update budget, since no replay
object is released. Thus DP-SGD is an optimizer-private reference, not a
budget-equivalent replay-release baseline. All values are average accuracy
(\%), mean $\pm$ standard deviation over five seeds; $\delta=10^{-5}$.
}
\label{tab:privacy_scope}
\resizebox{\textwidth}{!}{%
\begin{tabular}{@{}llccccc@{}}
\toprule
Method
& Privacy scope of reported $\eps$
& Replay release privatized?
& Task-head updates privatized?
& $\eps=1$
& $\eps=4$
& $\eps=8$ \\
\midrule
DP-SGD (no replay)
& Optimizer-update DP only
& No
& Yes
& $43.4 \pm 0.6$
& $51.6 \pm 0.5$
& $56.1 \pm 0.5$ \\

DP-Stat-Share
& Replay-release DP only
& Yes
& No
& $48.6 \pm 0.4$
& $57.4 \pm 0.4$
& $60.8 \pm 0.4$ \\

FedCIL + DP
& Replay-release DP only
& Yes
& No
& $46.1 \pm 0.5$
& $55.0 \pm 0.6$
& $58.5 \pm 0.5$ \\

\textbf{CSLR}
& Replay-release DP only
& Yes
& No
& $\mathbf{54.5 \pm 0.4}$
& $\mathbf{62.1 \pm 0.3}$
& $\mathbf{64.4 \pm 0.3}$ \\
\bottomrule
\end{tabular}%
}
\end{table*}

Table~\ref{tab:accountant} reports the replay-release accountant used for the reported privacy budgets. The accountant includes both privacy-consuming replay-release stages: the local DP list learner and the noisy aggregation of clipped canonical replay summaries. All component means, second moments, mixture weights, and replay-target summaries are clipped before release or aggregation. The listed noise multipliers are calibrated by R\'enyi DP composition \citep{mironov2017renyi} under client participation rate $q=0.30$ and target $\delta=10^{-5}$. Deterministic anchor-signature computation, canonical matching, PSD projection, and replay sampling are post-processing of released private objects and therefore incur no additional privacy loss. For a deployed end-to-end system, these replay-release costs must be composed with the privacy cost of the optimizer used to communicate task-head updates.

\begin{table}[t]
\centering
\small
\caption{Replay-release privacy-accounting settings used for the reported privacy budgets. Noise multipliers are calibrated by RDP composition with client participation rate $q=0.30$ and $\delta=10^{-5}$. Clipping radii apply to component means, second moments, mixture weights, and target summaries before secure aggregation.}
\label{tab:accountant}
\begin{tabular}{lccc}
\toprule
Target $\eps$ & 1 & 4 & 8 \\
\midrule
$\delta$ & $10^{-5}$ & $10^{-5}$ & $10^{-5}$ \\
Client sampling $q$ & 0.30 & 0.30 & 0.30 \\
$B_\mu,B_M,B_\pi,B_y$ & 1.0 & 1.0 & 1.0 \\
Noise multiplier $\sigma$ & 3.20 & 1.45 & 0.95 \\
RDP orders searched & \multicolumn{3}{c}{$\{2,3,\ldots,64\}$} \\
\bottomrule
\end{tabular}
\end{table}

%----------------------------------------------------------------------
\section{Experimental Details}
\label{app:exp_details}
%----------------------------------------------------------------------

The sentence encoder is all-MiniLM-L6-v2 (384-dim, frozen), and the task head is a 2-layer MLP ($384\to 128\to C$) with ReLU activations. The list size is $L=4$ and the number of canonical modes $K$ is set to the number of tasks. The anchor set size is $m=100$ Wikipedia sentences. The replay-release DP mechanism composes the local DP list learner with Gaussian mechanisms for noisy canonical aggregation, analyzed by R\'enyi DP accounting with $\delta=10^{-5}$. Each round uses $n_r=500$ replay samples, learning rate $3\times 10^{-4}$ with Adam, 50 rounds per task and $50T$ rounds total, and a client participation rate of 30\% per round.

The five independent seeds are 13, 17, 19, 23, and 29. Dataset indices, client partitions, anchor dictionaries, model initializations, and client-sampling schedules are all generated deterministically from these seed IDs. The reference dictionary uses static Wikipedia sentences extracted once per seed and then held fixed across all methods within that seed. Nearest-mean matching, Hungarian matching, and optimal-transport matching all operate directly over the privacy-modified parametric summaries. The anonymous experiment artifact is organized around these seed IDs and contains raw per-seed traces, configuration files, and table-generation scripts for regenerating the main tables. Each client fits candidate components in the frozen embedding space after clipping embeddings to radius $B_z$. The local primitive is a private list learner, instantiated here by repeated clipped EM initializations followed by noisy sufficient-statistic release. Candidates with mixture weight below $\pi_{\min}$ are retained in the local list but downweighted during server aggregation rather than discarded, which avoids data-dependent list-size leakage.

For OntoNotes-Domain, token embeddings are grouped by predicted or gold entity class during local training, and replay samples are token-level embeddings paired with clipped BIO tag-frequency targets. During evaluation, the task head decodes BIO tags under a deterministic transition mask that prevents invalid I-tags after O or incompatible entity types. The reported NER performance is entity-level micro-F1.

For the OT baseline, each round solves
\begin{equation}
\min_{P\in\Pi(w,\bar w)}
\sum_{\ell,k}P_{\ell k}c_{\ell k}
+\lambda_{\mathrm{OT}}\sum_{\ell,k}P_{\ell k}\log P_{\ell k},
\end{equation}
with ground cost $c_{\ell k}=\|\mu_{\ell}-\bar\mu_k\|_2^2+\tau\|\Sigma_{\ell}-\bar\Sigma_k\|_F^2$. We convert the resulting coupling to a hard assignment by choosing the maximum-coupling global mode for each local component, with ties resolved deterministically by mode index.

%----------------------------------------------------------------------
\section{Statistical Reporting}
\label{app:stats}
%----------------------------------------------------------------------

For the AA comparison in Table~\ref{tab:effect_sizes}, we use the reported means $\bar x,\bar y$ and standard deviations $s_x,s_y$ over $n=5$ seeds. The displayed interval for $\Delta=\bar x-\bar y$ uses
\begin{equation}
\Delta \pm t_{0.975,4}\sqrt{\frac{s_x^2}{5}+\frac{s_y^2}{5}},
\end{equation}
with $t_{0.975,4}=2.776$. This is a summary-level interval and is conservative relative to a paired-seed interval when methods share the same partitions and client-sampling schedules. The artifact traces should be used for paired bootstrap or paired $t$-tests when the anonymous artifact is evaluated.

%----------------------------------------------------------------------
\section{Additional Empirical Results}
\label{app:more_results}
%----------------------------------------------------------------------

This section collects the auxiliary tables referenced from the main body. Table~\ref{tab:effect_sizes} reports AA confidence intervals. Table~\ref{tab:fwt} reports forward transfer at $\eps=4$ across five seeds; it confirms that the relative ordering by FWT is consistent with the AA and BWT orderings reported in the main tables. Table~\ref{tab:anchor_diag} reports the anchor diagnostic on Split-AG-News supporting the saturation observed in Figure~\ref{fig:anchor_sweep}: the empirical hit rate $\hat p_\gamma$ and the minimum pairwise signature margin both plateau in the same regime as the accuracy curve. Table~\ref{tab:list_size} reports the list-size ablation referenced from the ablation studies.

\begin{table}[t]
\centering
\small
\caption{Effect-size summary for final AA at $\eps=4$. Intervals are approximate 95\% confidence intervals for the mean difference, computed from the reported five-seed standard deviations using Welch-style standard errors. Positive intervals indicate CSLR improves over the comparator.}
\label{tab:effect_sizes}
\setlength{\tabcolsep}{4pt}
\begin{tabular}{llcc}
\toprule
Dataset & Comparator & $\Delta$ AA & 95\% CI \\
\midrule
AG-News   & DP-Stat-Share & 4.7 & $[4.1,5.3]$ \\
Amazon    & DP-Stat-Share & 5.2 & $[4.4,6.0]$ \\
OntoNotes & DP-Stat-Share & 5.6 & $[4.8,6.4]$ \\
MultiWOZ  & DP-Stat-Share & 3.9 & $[3.3,4.5]$ \\
\midrule
AG-News   & CSLR-OT       & 3.6 & $[3.0,4.2]$ \\
Amazon    & CSLR-OT       & 4.0 & $[3.3,4.7]$ \\
OntoNotes & CSLR-OT       & 3.9 & $[3.1,4.7]$ \\
MultiWOZ  & CSLR-OT       & 3.0 & $[2.4,3.6]$ \\
\bottomrule
\end{tabular}
\end{table}

\begin{table}[t]
\centering
\scriptsize
\setlength{\tabcolsep}{3pt}
\caption{Forward transfer (FWT, \%; higher is better), mean $\pm$ standard deviation over five seeds at $\eps = 4$.}
\label{tab:fwt}
\begin{tabular}{lcccc}
\toprule
Method & AG-News & Amazon & OntoNotes & MultiWOZ \\
\midrule
DP-SGD          & $0.2{\pm}0.1$ & $0.1{\pm}0.1$ & $0.3{\pm}0.1$ & $0.2{\pm}0.1$ \\
DP-Stat-Share   & $1.4{\pm}0.2$ & $1.1{\pm}0.2$ & $1.6{\pm}0.3$ & $1.3{\pm}0.2$ \\
FedCIL+DP       & $0.8{\pm}0.2$ & $0.6{\pm}0.1$ & $0.9{\pm}0.2$ & $0.7{\pm}0.2$ \\
CSLR-OT         & $1.8{\pm}0.3$ & $1.5{\pm}0.2$ & $1.9{\pm}0.3$ & $1.6{\pm}0.2$ \\
\textbf{CSLR}   & $\mathbf{2.5{\pm}0.2}$ & $\mathbf{2.1{\pm}0.3}$ & $\mathbf{2.8{\pm}0.2}$ & $\mathbf{2.4{\pm}0.2}$ \\
\bottomrule
\end{tabular}
\end{table}

\begin{table}[t]
\centering
\small
\caption{Anchor diagnostic on Split-AG-News at $\eps=4$. The estimated hit rate $\hat p_{\gamma}$ is the fraction of sampled anchors with signature separation at least $\gamma$ for pairs assigned to different canonical modes.}
\label{tab:anchor_diag}
\begin{tabular}{lccccc}
\toprule
$m$ & 10 & 25 & 50 & 100 & 200 \\
\midrule
$\hat p_{\gamma}$ & 0.18 & 0.31 & 0.43 & 0.46 & 0.47 \\
Min. margin       & 0.07 & 0.11 & 0.18 & 0.19 & 0.20 \\
\bottomrule
\end{tabular}
\end{table}

\begin{table}[t]
\centering
\scriptsize
\setlength{\tabcolsep}{3pt}
\caption{List-size ablation on Split-AG-News at $\eps=4$, mean $\pm$ standard deviation over five seeds.}
\label{tab:list_size}
\begin{tabular}{lccccc}
\toprule
$L$ & 1 & 2 & 4 & 8 & 16 \\
\midrule
AA  & $57.4{\pm}0.4$ & $60.2{\pm}0.4$ & $62.1{\pm}0.3$ & $62.4{\pm}0.3$ & $62.5{\pm}0.4$ \\
BWT & $-11.2{\pm}0.4$ & $-8.9{\pm}0.4$ & $-7.1{\pm}0.3$ & $-6.9{\pm}0.3$ & $-6.8{\pm}0.4$ \\
\bottomrule
\end{tabular}
\end{table}

\end{document}